
\documentclass[11pt]{article}

\usepackage[T1]{fontenc}
\usepackage{times}
\usepackage{graphics}
\usepackage[utf8x]{inputenc}
\usepackage{verbatim}
\usepackage{amsfonts}
\usepackage{amsmath}
\usepackage{amssymb}
\usepackage{listings}
\usepackage{latexsym}
\usepackage{graphicx}
\usepackage{epsfig}
\usepackage{cite}
\usepackage{url}
\usepackage{algorithmic}
\usepackage{eso-pic}
\usepackage{fancyvrb,relsize}
\usepackage{fancyhdr}
\usepackage{multicol}

\sloppy        

 \textwidth      16cm
 \textheight     24cm
 \evensidemargin 0cm
 \oddsidemargin  0cm
 \topmargin      -1cm
 \headheight     0cm
 \headsep 1.5 cm 

\renewcommand{\baselinestretch}{1.0}

\setlength{\bigskipamount}{2\baselineskip}
\setlength{\medskipamount}{1.5\baselineskip}
\setlength{\smallskipamount}{\baselineskip}

\newcommand{\sk}{\smallskip\noindent}


\newcommand{\vect}[1]{\mathbf{#1}}
\newcommand{\avg}[1]{\left \langle #1 \right \rangle}
\DeclareMathOperator*{\argmin}{arg\,min}

\newtheorem{theorem}{Theorem}
\newenvironment{proof}
        {{\bf Proof.}}
        {\hspace*{\fill}$\Box$\par\vspace{4mm}}

\newcounter{labenumi}
\newenvironment{labeled-enumerate-i}[2]
        {\begin{list}
         {#1#2\arabic{labenumi}.}
         {\usecounter{labenumi}
          \setlength{\labelwidth}{1cm}
          }
         }
        {\end{list}}
\newcounter{labenumii}
\newenvironment{labeled-enumerate-ii}[4]
        {\begin{list}
         {#1#2#3#4\arabic{labenumii}.}
         {\usecounter{labenumii}
          \setlength{\labelwidth}{1cm}
          }
         }
        {\end{list}}

\begin{document}

\renewcommand{\baselinestretch}{1.0}
\normalsize

\vspace{-2.0cm}
\title{
On a model for integrated information\footnote{Contact Author: Enrico Nardelli, \textsf{nardelli@mat.uniroma2.it}}
}

\author{
{Alessandro Epasto$^{1}$} \and {Enrico Nardelli$^{1}$} \and
\hspace*{-1cm}
\begin{minipage}{0.95\textwidth}
\vspace{-0.5cm}
\small
\bigskip
\newcounter{affil}
\begin{list}
{\arabic{affil}.}
{\usecounter{affil}
 \setlength{\labelwidth}{0.5cm}
  \setlength{\rightmargin}{\leftmargin}
 }
\item Dept. of Mathematics, Univ. of Roma Tor Vergata, Roma, Italy.
\end{list}
\end{minipage}
}

\date{\small \it Printed: \today}
\pagestyle{myheadings}
\markright{\small \it Version 1.2.1 --- Last Revision: 30 December 2009}

\maketitle

\vspace{-0.5cm}
\begin{abstract}
\renewcommand{\baselinestretch}{1.0}
\normalsize
\noindent
In this paper we give a thorough presentation of a model proposed by Tononi et al. for modeling \emph{integrated information}, i.e. how much information is generated in a system transitioning from one state to the next one by the causal interaction of its parts and \emph{above and beyond} the information given by the sum of its parts. We also provides a more general formulation of such a model, independent from the time chosen for the analysis and from the uniformity of the probability distribution at the initial time instant. Finally, we prove that integrated information is null for disconnected systems.
\end{abstract}
\vspace{-0.5cm}

\sk {\bf Keywords}:
\renewcommand{\baselinestretch}{1.0}
\normalsize
integrated information, effective information, information theory, neural networks, probabilistic boolean networks.

\section{Introduction}
\label{intro}
\renewcommand{\baselinestretch}{1.3}
\normalsize
The term \emph{integrated information} (denoted $\phi$, for short) has been introduced by Giulio Tononi~\cite{Tono-01-AIB, ToSp-03-BMC, Tono-04-BMC} to characterize the capacity of a system to integrate information acquired by its parts. Informally speaking, the integrated information owned by a system in a given state can be described as the information (in the Theory of Information sense) generated by a system in the transition from one given state to the next one as a consequence of the causal interaction of its parts above and beyond the sum of information generated independently by each of its parts.

Such a theory was first introduced as a linear model~\cite{ToSE-94-PNAS, ToSE-96-PNAS, ToES-98-TCS, ToSp-03-BMC, Tono-04-BMC}, then reformulated as a discrete one~\cite{BaTo-08-PCB, Tono-08-BB, BaTo-09-PCB} and was aimed at trying to formally capture what is consciousness in living beings~\cite{ToEd-98-Sci, EdTo-00, Tono-08-BB}. Its description is not always clear from a mathematical point of view, and to best of our knowledge this is the first formal description where all steps of the model are presented in detail using the framework of probabilistic boolean networks.

In our presentation we also provides a more general formulation of the model, which can be used for analyzing the system at a generic time instant, and which does not require the assumption of uniformity of the probability distribution at the initial time instant.

We also formally prove here, for the first time in the literature to the best of our knowledge, that integrated information is null for a disconnected system, that is a system made up by independent components.

\sk
The characterization of integrated information is based on another concept, always defined by Tononi and coauthors, named \emph{effective information} and modeling how much information is gained by an external observer on the previous state of a system from checking which is its current state, with respect to what can "a priori" be deduced on the previous state from the known dynamics of the system itself. Given this emphasis on the experimental side of the knowledge acquisition process, we suggest here to use the terms "experimental information" or "Galileian information" as synonyms for "effective information".

Effective information is zero for static systems or uniformly random systems, which is consistent with everyday scientist's experience. And, similarly, integrated information is also zero for disconnected systems, independently from their kind.

\section{Probabilistic Boolean Networks}
\label{probabilistic-boolean-networks}
Let $X=(V,E)$ be a directed graph with $n$ boolean nodes, i.e. taking values in $\{0,1\}$. The value taken by a node is called also its \emph{state}. Edge $(u,v) \in E$ models the fact that node $v$ gets in input the state of $u$. We assume time runs in discrete steps or instants, and nodes may change their value with the flow of time depending on (the value of) the states of their input nodes.

Temporal evolution of state of node $i$ is given by a law $f_i:\{0,1\}^{n_i} \rightarrow \{0,1\}$ computing state of $i$ at the next time instant as a function only of the current state of its $n_i\leq n$ input nodes. Self loops are admitted. Nodes can all have the same law $f$ or each node can have its specific law. In any case laws are constant with time.

We call $X$ as defined above a \emph{Deterministic Boolean Network}. To put things into context, \emph{Random Boolean Networks} have been defined in the literature since many years, differing from the deterministic version only in the fact that each $f_i$ is randomly chosen when building the network. Random boolean networks have been widely studied as model for gene expression in biological systems.

Various probabilistic versions of Boolean Networks have also been defined, different from ours, for example~\cite{SDKZ-02-BI}, where each node at each time instant randomly chooses, according to a given probability distribution, the law to be used from a finite domain of admissible laws.

\sk
Our version of \emph{Probabilistic Boolean Network} (PBN, for short) assumes the probabilistic law $r_i:\{0,1\}^{n_i} \rightarrow [0,1]$ associated to node $i$ provides for each configuration of the states of the $n_i$ input nodes the probability $r_i$ that at the next time instant node $i$ has (equivalently, is in) state $1$ (being then $1-r_i$ the probability $i$ is in state $0$). It can be shown that this model can describe every network defined according to the model introduced in~\cite{SDKZ-02-BI}. In the following we use interchangeably the terms system and network.

\sk
At each time instant $t$ a PBN can be in any of its $2^n$ states, we assume are provided of some arbitrary enumeration $\{x_i\}$. State of network $X$ at time $t$ is denoted $X_t$. A PBN can also be considered as a \emph{Markov chain with a finite space state}.

\sk
A PBN is completely described by its \emph{state transition matrix} $S$, whose elements $s_{i j}$ are:
\[
  s_{i j} \circeq p(X_{t+1}=x_j \: | \: X_t=x_i) \nonumber
\]
that is, element $s_{i j}$ is the probability that at time $t+1$ the network is in state $x_j$ {\em conditioned} to the fact that at time $t$ the network was in state $x_i$. Note that since the probabilistic law associated to each node is time constant, state transition matrix $S$ is also time constant, hence we can speak of an \emph{homogeneous Markov chain}. A square matrix of real numbers is a state transition matrix if $0\leq s_{i j}\leq 1$ e $\sum_{i=1}^{n} s_{i j} = 1$.

\sk
Values of $s_{i j}$ can be easily computed by means of the $r_k$ values for each node $k$ as it follows. Let $i=\sigma_n \sigma_{n-1} \ldots \sigma_1$ be the bit string representing the network state at instant $t$, where $\sigma_k$ represent state of node $k$ at instant $t$. The network state at the next instant $t+1$ is $j=\sigma'_n \sigma'_{n-1} \ldots \sigma'_1$ where $\sigma'_k$ is the state of node $k$ computed by law $r_k$ for instant $t+1$. It is $\sigma'_k=1$ with probability $r_k(\sigma_n \sigma_{n-1} \ldots \sigma_1)$ and $\sigma'_k=0$ with probability $1-r_k(\sigma_n \sigma_{n-1} \ldots \sigma_1)$. Then
\begin{equation}
\label{s-from-r}
s_{i j}=\prod_{k=1}^{n} \rho_k \nonumber
\end{equation}
where $\rho_k=r_k(\sigma_n \sigma_{n-1} \ldots \sigma_1)$ if $\sigma'_k=1$ and $\rho_k=1-r_k(\sigma_n \sigma_{n-1} \ldots \sigma_1)$ if $\sigma'_k=0$.

\sk
Let us denote with $\vect{p}_t(i)=p(X_t=x_i)$ probability that network is in state $x_i$ at instant $t$. State distribution probability at $t+1$ is given by:
\[
 \vect{p}_{t+1}(x_i) = \sum_{j=1}^{2^n} {\vect{p}_t(x_j) s_{j i} }
\]
Note that, even if $S$ is time constant (i.e., stationary), state probability distribution is not necessarily so. Let $\vect{p}_t$ be the row vector with elements $\vect{p}_t(i)$. Previous formula can be written in a matrix form as
\[
\vect{p}_{t+1} = \vect{p}_t \cdot S
\]
and, denoting with $S^i$ the $i$-th column of $S$, it is
\[
\vect{p}_{t+1}(i) = \vect{p}_t \cdot S^i
\]
If for some $t$ it is  $\vect{p}_{t+1}(\cdot)= \vect{p}_t(\cdot)$ then we say the network is in the \emph{stationary regime}. It is then
\[
\vect{p} = \vect{p} \cdot S
\]
that is $\vect{p}$ is an eigenvector of $S$ with eigenvalue $1$. Note that not every eigenvector of $S$ can be a stationary probability distribution, since it has to fulfill probability distribution constraints. For example, the null eigenvector is never a stationary probability distribution.

Row $S_i$ of the state transition matrix provides the conditional probability distribution $p(X_{t+1}\,|\,X_{t}=x_i)$ describing network state at the instant \emph{next} to the one the network is in state $x_i$.

\sk
Network dynamics can also be analyzed backwards in time. Let us assume that we have observed or measured that network at instant $t$ is in a given state. We can then compute state distribution probability for instant $t-1$, that is we can compute the law by which states at instant $t-1$ might have caused the state actually observed or measured at instant $t$. This is provided by defining a \emph{state backward-transition matrix} $B$, describing probabilities obtained inverting through Bayes rule the relations between events. Its elements $b_{i j}$ are:
\[
  b_{i j}(t) \circeq p(X_{t-1}=x_j \, | \, X_{t}=x_i)
\]
that can be written as
\[
  b_{i j}(t) = \frac{p(X_{t-1}=x_j , \,  X_{t}=x_i)}{p(X_t = x_i)}
\]
and applying again Bayes rule we have
\[
    b_{i j}(t) =
    \frac{ p(X_{t} =x_i \, | \, X_{t-1}=x_j) p(X_{t-1} = x_j) }{ p(X_{t} = x_i) } =
    \frac{ s_{j i} p(X_{t-1} = x_j) }{ p(X_{t} = x_i)} =
    \frac{\vect{p}_{t-1}(j) s_{j i}}{\vect{p}_t(i)} = \frac{\vect{p}_{t-1}(j) s_{j i}}{\vect{p}_{t-1} \cdot S^i}
\]
If at instant $t-1$ state probability distribution is uniform then last formula becomes

\begin{equation}
\label{bij-uniforme}
b_{i j}(t) =  \frac{s_{j i}}{\sum_k s_{k i}}
\end{equation}
Note that if state probability distribution is uniform then state backward-transition matrix $B$ is a kind of transpose of the state transition matrix $S$. Note also that while $S$ is time constant, $B$ is not so, in general.

Row $B_i(t)$ of the state backward-transition matrix $B$ provides the conditional probability distribution $p(X_{t-1}\,|\,X_{t}=x_i)$ describing network state at the instant \emph{previous} to the one the network is in state $x_i$.

\section{Effective Information}
\subsection{Introduction}
Effective information can be informally described as \emph{the quantity of information on possible predecessors of current states} acquired \emph{additionally} from actually measuring the current network state \emph{with respect to what can be acquired from the knowledge of state transition matrix only}.
We propose calling it \emph{experimental information} or \emph{Galileian information}, given the emphasis it gives to experimentally acquired knowledge with respect to purely theoretical knowledge. Here quantity of information is intended in the standard sense of the Shannon's Information Theory.

\sk
The main question effective informations answers to is: if network observation finds that its current state is $x_i$, which is the additional knowledge provided by this measure with respect to what can be known on the network by its state transition matrix only, i.e. without knowing which is the current state of the network?

Still remaining at the informal level this additional knowledge can be described as the reduction in uncertainty provided by the actual measurement with respect to the uncertainty existing on the basis of the state transition matrix only.

On one side there are those systems whose regime trajectory in the space state is a deterministic cycle. For such systems the observation provides an effective information of $\log_2 k$ bits\footnote{from now on all logarithms are to the base $2$} (where $k$ is the number of the nodes on the cycle, i.e. its length). Since a deterministic closed trajectory of length $k$ in the state space corresponds to a suitable subset of $k$ rows of the state transition matrix each containing exactly one value $1$, and since before measuring the system the uncertainty is maximum -- given that the system can be in any of these $k$ states -- while after measuring the systems it is univocally known the predecessor of the current state, the information acquired through observation is maximum and equal, according to the standard way of measuring information, to $\log k$ bits.

On the other side there are those systems whose behavior in the state space is uniformly random, that is those systems where each state can be, with equal probability, the predecessor of the current state. Measuring the actual current state in these systems provides an effective information of $0$ bits since no reduction in uncertainty is provided through the observation (complete uncertainty both before and after the measurement). Also for completely static systems, that is systems whose state is constant while time runs there is no reduction in uncertainty provided through the observation (no uncertainty either before or after the measurement).

\subsection{Formal definition}
We define the effective information obtained by observing that system $X$ is in state $x_i$ at instant $t$ as
\begin{equation}
\label{ei}
  ei(t,x_i) \circeq D_{KL}(B_i(t) \; || X_{t-1})
\end{equation}
where $D_{KL}$ is the Kullback-Leibler divergence\footnote{The Kullback-Leibler divergence (or distance) of probability distribution $q(x)$ from probability distribution $p(x)$ is defined as $ D_{KL}(p||q) \circeq \sum_{x \in \Omega_x}p(x)\log{ \frac{p(x)}{q(x)}} = \avg{\log{\frac{p(x)}{q(x)}}}_p $ and note it is asymmetric.}. Then
\begin{eqnarray}
ei(t,x_i) & = &\sum_j{b_{i j}(t) \log{\frac{b_{i j}(t)}{p(X_{t-1}=x_j)}}} \nonumber \\
 & = & -H(B_i(t)) -\sum_j{b_{i j}(t)\log{p(X_{t-1}=x_j)}} \nonumber \label{ei-2}
\end{eqnarray}
Our definition is a generalization of the one provided by Tononi and coauthors (cfr. equations 1A and 1B of~\cite{BaTo-08-PCB}). Ours in fact allows to study system behavior for each time instant and for each probability distribution $X_0$, while in~\cite{BaTo-08-PCB} the time instant under investigation is always $t=1$ and it is always assumed probability distribution $X_0$ is the uniform one. Our formulation hence allows to model both the transient and the stationary regime of a system.

\sk
For the case when the state probability distribution $X_{t-1}$ is uniform the formula above becomes:
\begin{eqnarray}
ei(t,x_i) & = & -H(B_i(t)) -\sum_j{b_{i j}(t)\log{\frac{1}{2^n}}} \nonumber \\
 & = & -H(B_i(t)) +n \sum_j{b_{i j}(t)} \nonumber \\
 & = & n - H(B_i(t)) \nonumber \label{ei-unif}
\end{eqnarray}

\sk
Effective information in the regime phase of a system is provided by considering equation~\eqref{ei} in the limit for the instant $t$ tending to infinity
\[
  ei(x_i) \circeq  D_{KL}(B_i \; || X_\infty)
\]
where $B_i$ ed $X_\infty$ are the stationary probability distributions defined by the limits, if they exist, of the probability distributions for instant $t$, which describe the regime phase of the system. That is:
\[
p(X_\infty = x_i) \circeq \lim_{t \to \infty}{p(X_t = x_i)} \circeq p_i
\]
and
\begin{equation}
\nonumber \label{b-infinity}
p(B_i = x_j) \circeq p(X_{\infty} = x_j | X_{\infty} = x_i) = \frac{s_{ji}p_j}{p_i}
\end{equation}
hence
\begin{equation}
\nonumber \label{ei-infinity}
  ei(x_i)   = \sum_j{b_{i j} \log{\frac{b_{i j}}{p(X_\infty=x_j)}}}
   = -H(B_i) -\sum_j{b_{i j}\log{p_j}}
\end{equation}
A system which has a uniformly random behavior in the regime phase has $H(B_i)=n$, since state probability distribution $p(X_{t-1} | X_t)$ is $p(x_j) = \frac{1}{2^n}$, hence
\[
  ei(x_i) =  -n -\sum_j{\frac{1}{2^n}\log{\frac{1}{2^n}}} = \sum_j{\frac{n}{2^n}} -n = n-n = 0
\]
A system completely static in the regime phase, i.e. which remains fixed in a single attraction state $x_i$, has $H(B_i)=0$ since the unique possible predecessor is $x_i$ itself and $p(x_j) = 0$ if $i \neq j$ from which we have
\[
  ei(x_i) = \log{1} = 0
\]
Note that sum is computed only on observable states (i.e. where $p(x_j) \neq 0$), to avoid the undeterminate form $0\log{\frac{0}{0}}$.

A system having in the regime phase a single cyclic attractor containing all states, i.e. a deterministic closed trajectory in the space state walking through all states, has $H(B_j)=0$ since each state has exactly one predecessor while $p(x_j) = \frac{1}{2^n}$ and hence
\[
ei(x_i) = 0-\log{\frac{1}{2^n}} = n
\]
The same holds, assuming the stationary state space distribution is uniform, when the system has more cyclic attractors partitioning all the space state.

If the system has a single cyclic attractor with $k<2^n$ states (or more cyclic attractors partitioning a subset of size $k<2^n$ of all states, still assuming a uniform stationary state space distribution) then it is $ei(x_i) = \log{k}$.

\sk
The analysis in~\cite{BaTo-08-PCB} assumes the maximum uncertainty and uniformity on the initial systems conditions and is focused on computing effective information in the instant right after the initial state. The formulation of effective information in~\cite{BaTo-08-PCB} is therefore the following particular case of ours:
\[
  ei_1(x_i) = D_{KL}(B_i(1) \, || \, X_0)
\]
Note also that since for this particular case the assumptions used for the derivation of~\eqref{bij-uniforme} hold, it can be written
\[
b_{ij}(1) = \frac{s_{ji}}{\sum_k{s_{ki}}}
\]

\subsection{Effective information of subsets}
\label{effective-information-subsets}
For the definition of integrated information it is required to define how to measure effective information for subsets of a given network $X$. Let $A \subseteq X$. When $X$ is in state $x_i$ we denote with $\pi_A(x_i) = {}^A\,\!x_i$ the state of $A$.
Let $A_t$ be the random variable representing state of $A$ at instant $t$. We can define for $A$ state transition matrix ${}^A\!S$ and state backward-transition matrix ${}^A\!B$ in analogy with the general case as
\[
  {}^A\!s_{ij} \circeq p(A_{t+1}=a_j \, | \, A_{t}=a_i)
\]
and
\[
  {}^A\,\!b_{i j}(t) \circeq p(A_{t-1}=a_j \, | \, A_{t}=a_i)
\]
Both can be obtained from $S$ e $p(\cdot)$ after some long but straightforward computations. Intuitively and informally speaking, the computation is based on summing transition probabilities over all states of $X$ which are equivalent with respect to subset $A$, averaged with their state probabilities.

\sk
Now, all definitions introduced for a network $X$ can be applied to any of its subset of nodes $A$ by substituting in the previous formulas $S$, $B$, and $X$ respectively with ${}^A\!S$, ${}^A\!B$, and $A$. We then obtain
\begin{equation}
\label{ei-t-A-ah}
ei(t,A,a_h) \circeq D_{KL}({}^A\!B_h(t)||A_{t-1})
\end{equation}

\section{Integrated Information}
We are now ready to formally define integrated information, that is the quantity of information generated in a system transitioning from one state to the next by the causal interaction of its parts, above and beyond the quantity of information generated independently by each of its parts.

Given a system $X$ let $V \subseteq X$ and $\{M_k\}$ a partition of $V$ in $m$ subsets. Let $M_k(t)$ be the random variables describing the state of the $k$-th component of the partition at instant $t$.
Let $X$ be in state $x_i$ at instant $t$. Then $V$ at the same instant is in state ${}^V\!x_i$ and the $k$-th component is in state ${}^{M_k}x_i$. In the following we use $v_h$ and $\mu_k$ as a shorthand for ${}^V\!x_i$ and ${}^{M_k}x_i$, respectively.

Partition-dependent integrated information is first defined for a subset $V$ as a function of partition $\{M_k\}$, time instant $t$, and current state $v_h$ as
\begin{equation}
\label{phi-V-M}
\phi(t,V,\{M_k\},v_h) \circeq ei(t,V,v_h) - \sum_{k=1}^m{ei(t,M_k,\mu_k)}
\end{equation}
Value computed by this formula clearly depends on the considered partition. Tipically, an unbalanced partition produces a lower value of $\phi$ (see~\cite{BaTo-08-PCB}). Hence the following normalization function is introduced
\begin{equation}
\label{N} \nonumber
N(t,V,\{M_k\},v_h) \circeq (m-1)\min_k\{H(M_k(t))\}
\end{equation}
Then, the \emph{Minimum Information Partition} (\emph{MIP}) is defined as the partition providing the minimum value for the integrated information after the normalization process, that is
\begin{equation}
\label{P-MIP} \nonumber
P_{\mbox{\scriptsize \emph{MIP}}}(t,V,v_h) \circeq \argmin_P\Big \{\frac{\phi(t,V,P,v_h)}{N(t,V,P,v_h)}\Big \}
\end{equation}
The above formula has been defined by Tononi for generic partitions, but in all of its papers and here it is only discussed the case of bi-partitions, i.e. partitions in two subsets.

\sk
Integrated information $\phi$ for subset $V$, in state $v_h$ at instant $t$, is now formally defined as the value of the partition-dependent integrated information computed on \emph{MIP}, that is
\begin{equation}
\label{phi-V} \nonumber
\phi(t,V,v_h) \circeq \phi(t,V,P_{\mbox{\scriptsize \emph{MIP}}}(t,V,v_h),v_h)
\end{equation}

\sk
And it is now possible to formally define the value of integrated information for the whole system $X$.
\label{complex}
A subset $V \subseteq X$ having $\phi > 0$ is called \emph{complex}. If it is not a proper subset of another subset with a larger $\phi$ it is called \emph{main complex}. The value of integrated information of $X$, in state $x_i$ at instant $t$, is defined as the value of integrated information of its main complex of maximum value.
\begin{equation}
\label{phi} \nonumber
\phi(t,x_i) \circeq \max_{V \subset X} \, {\phi(t,V,P_{\mbox{\scriptsize \emph{MIP}}}(t,V,v_h),v_h)}
\end{equation}
The value of integrated information averaged over all states of the system is provided through the state distribution probability $p_t(\cdot)$, that is
\begin{equation}
\label{avg_phi} \nonumber
\phi(t) \circeq \sum_{x_i \in X} {\phi(t,x_i)\,p_t(i)}
\end{equation}

\section{Integrated information in disconnected systems}
Intuitively, any system having a partition in two independent subsets, i.e. that can be partitioned in two subsets such that no node in a subset affects the state value of nodes in the other subset, should have zero as value of its integrated information.

We now give a formal proof of this property, to the best of our knowledge never appeared in the literature. We consider the value of integrated information assuming at instant $t-1$ the system has a uniform state probability distribution, consistently with discussion in~\cite{BaTo-08-PCB}. Remember that for a subset $V$ of the system $X$ in state $x_h$ we use $v_h$ as a shorthand for ${}^V\!x_h$, the restriction of $x_h$ to nodes in $V$.

\begin{theorem}[Integrated information in a disconnected network]
Let $A'$ and $A''$ be two disjoint subsets of a network $X$, $A' \cup A'' =  V \subseteq X$. Let us denote with $v_h$ the current state of $V$, and with $a'_h$ e $a''_h$ the current states of subsets $A'$ and $A''$, respectively.

For each state $v_h$ and time instant $t$ it is
\[
\phi(t,V,\{A',A''\},v_h) = 0
\]
\end{theorem}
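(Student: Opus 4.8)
The plan is to unfold the definition
$\phi(t,V,\{A',A''\},v_h) = ei(t,V,v_h) - ei(t,A',a'_h) - ei(t,A'',a''_h)$
and to show that the first term is exactly the sum of the other two. Since the hypothesis fixes a uniform state probability distribution at instant $t-1$, and the marginal of a uniform distribution on the states of $V$ is the product of the uniform distributions on the states of $A'$ and of $A''$, I would work throughout with the closed form valid in the uniform case, namely $ei(t,A,a_h) = |A| - H({}^A\!B_h(t))$ for each of the three sets $A\in\{V,A',A''\}$ (with $|A|$ the number of nodes of $A$), obtained by specializing the uniform effective-information formula established above, replacing $S$, $B$, $X$ by ${}^A\!S$, ${}^A\!B$, $A$. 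Because $|V| = |A'| + |A''|$, the claim $\phi(t,V,\{A',A''\},v_h) = 0$ reduces to the single entropy identity $H({}^V\!B_h(t)) = H({}^{A'}\!B_h(t)) + H({}^{A''}\!B_h(t))$.

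To get this identity I would prove the stronger fact that the backward-transition distribution of $V$ conditioned on $V_t = v_h$ is the product of the backward-transition distribution of $A'$ conditioned on $A'_t = a'_h$ and that of $A''$ conditioned on $A''_t = a''_h$; additivity of Shannon entropy over independent factors then gives the identity. The factorization I would establish in two steps. First, disconnectedness means the law of every node of $A'$ reads only nodes of $A'$ and, symmetrically, for $A''$; hence the joint forward transition probability of $V$, being a product $\prod_{k\in V}\rho_k$ over the nodes of $V$, splits as the product of the forward transition probability of $A'$ on its sub-state times that of $A''$ on its sub-state, and this survives the averaging over states of $X$ agreeing on $V$ that defines ${}^A\!S$. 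Second, I would feed this product form, together with the uniform prior on $V_{t-1}$ written as the product of the uniform priors on $A'_{t-1}$ and $A''_{t-1}$, into the uniform Bayes formula~\eqref{bij-uniforme} applied to $V$; the normalizing denominator then also factors, and the backward distribution of $V$ separates as announced. Substituting back gives $ei(t,V,v_h) = (|A'| + |A''|) - H({}^{A'}\!B_h(t)) - H({}^{A''}\!B_h(t)) = ei(t,A',a'_h) + ei(t,A'',a''_h)$, so $\phi(t,V,\{A',A''\},v_h) = 0$ for every state $v_h$ and every instant $t$.

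I expect the main obstacle to be the bookkeeping in the first step of the factorization: carefully checking that ${}^V\!S$ --- which is recovered from $S$ and the state probabilities by summing transition probabilities over all states of $X$ equivalent with respect to $V$, weighted by their probabilities --- genuinely splits as the product of ${}^{A'}\!S$ and ${}^{A''}\!S$ once the components are disconnected, i.e. that the nodes outside $V$, if any, introduce no correlation between the two blocks (this is precisely where the hypothesis that no node of one subset affects nodes of the other is used). A secondary point is the usual care with states of zero probability, so that the conditional probabilities and the logarithms in the entropies are well defined, exactly as in the remark on the indeterminate form $0\log\frac{0}{0}$ made after the regime-phase computations.
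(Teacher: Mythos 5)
Your proposal is correct, and its skeleton coincides with the paper's: both reduce, via the uniform-prior closed form $ei(t,A,a_h)=|A|-H({}^{A}\!B_h(t))$ and the cancellation $|V|=|A'|+|A''|$, to the single entropy identity $H({}^{V}\!B_h(t))=H({}^{A'}\!B_{h'}(t))+H({}^{A''}\!B_{h''}(t))$. Where you diverge is in how that identity is obtained. The paper applies the chain rule of entropy to ${}^{V}\!B_h(t)$, viewed as the joint conditional law of $(A'_{t-1},A''_{t-1})$ given $V_t=v_h$, and then invokes ``the independence between $A'$ and $A''$'' to drop the conditioning on $A'_{t-1}$ and to replace conditioning on $V_t=v_h$ by conditioning on $A'_t=a'_h$ and $A''_t=a''_h$ respectively; these two facts are asserted rather than derived from the network structure. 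You instead prove the stronger statement that ${}^{V}\!B_h(t)$ factorizes as the product of ${}^{A'}\!B_{h'}(t)$ and ${}^{A''}\!B_{h''}(t)$, by pushing the node-level product form of the forward transition probabilities through the construction of ${}^{V}\!S$ and then through the uniform Bayes inversion~\eqref{bij-uniforme}, and you conclude by additivity of entropy over independent factors. Your route costs the bookkeeping you anticipate (checking that marginalizing over states of $X$ that agree on $V$ preserves the product form, i.e.\ that nodes outside $V$ do not couple the two blocks, and the usual care with zero-probability states), but it buys exactly the two facts the paper leaves implicit, and it makes explicit where the disconnectedness hypothesis --- that each block's laws read only its own nodes --- is actually used; in that sense your argument is a more complete justification of the same cancellation.
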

\begin{proof}
From the definition~\eqref{phi-V-M} of partition-dependent integrated information and the definition~\eqref{ei-t-A-ah} of the effective information for a subset it is
\begin{eqnarray}
\phi(t,V,\{A',A''\},v_h) & = & ei(t,V,v_h) -ei(t,A',a'_h) -ei(t,A'',a''_h) \nonumber \\
  & = & D_{KL}({}^V\!B_h(t) \, || \, V_{t-1}) -D_{KL}({}^{A'}\!B_{i}(t) \, || \, A'_{t-1}) -D_{KL}({}^{A''}\!B_{j}(t) \, || \, A''_{t-1}) \label{all-DKL}
\end{eqnarray}
From the definition of the Kullback-Leibler divergence it is
\[
D_{KL}({}^V\!B_h(t) \, || \, V_{t-1}) = -H({}^V\!B_h(t)) -\sum_j{{}^V\!b_{h j}(t)\log{p(V_{t-1}=v_j)}}
\]
Remember that ${}^V\!B_h(t)$ is a conditional probability distribution for the state preceding the current one
\begin{eqnarray}
p({}^V\!B_h(t) = v_j) & = & p(V_{t-1} = v_j \,|\, V_{t} = v_h) \nonumber \\
 & = & p(A'_{t-1} = a'_j \wedge A''_{t-1} = a''_j \,|\, V_{t} = v_h) \nonumber
\end{eqnarray}
Applying the chain rule of entropy it is
\[
H({}^V\!B_h(t)) = H(A'_{t-1} \,|\, V_{t} = v_h) + H\Big( (A''_{t-1}
\,|\, V_{t} = v_h) \big | A'_{t-1} \Big)
\]
and given the independence between $A''$ and $A'$ it follows that
\begin{eqnarray}
H({}^V\!B_h(t)) & = & H(A'_{t-1} \,|\, V_{t} = v_h) + H(A''_{t-1} \,|\, V_{t} = v_h) \nonumber \\
 & = & H(A'_{t-1} \,|\, A'_{t} = a_{h'}) + H(A''_{t-1} \,|\, A''_{t} = a_{h''}) \nonumber \\
 & = & H({}^{A'}\!B_{h'}(t)) + H({}^{A''}\!B_{h''}(t)) \nonumber
\end{eqnarray}
From the assumption of uniform state probability distribution at $t-1$ it is
\begin{eqnarray}
D_{KL}({}^V\!B_h(t) \, || \, V_{t-1}) & = & -H({}^V\!B_h(t)) + |V| \nonumber \\
D_{KL}({}^{A'}\!B_h(t) \, || \, A'_{t-1}) & = & |A'|-H({}^{A'}\!B_h(t)) \nonumber \\
D_{KL}({}^{A''}\!B_h(t) \, || \, A''_{t-1}) & = & |A''|-H({}^{A''}\!B_h(t)) \nonumber
\end{eqnarray}
and substituting the above right members for the left ones in equation~\eqref{all-DKL} and considering that $|V| = |A'|+|A''|$ we obtain
\[
\phi(t,V,\{A',A''\},v_h) = |V| -|A'| -|A''| -H({}^V\!B_h(t)) + H({}^{A'}\!B_{h'}(t)) +
H({}^{A''}\!B_{h''}(t)) = 0
\]
\end{proof}

\section{Conclusions}
\label{conclusions}
In this paper we have given a thorough presentation of a model proposed by Giulio Tononi~\cite{Tono-01-AIB, ToSp-03-BMC, Tono-04-BMC} for modeling \emph{integrated information}, i.e. how much information is generated in a system by causal interaction of its parts and \emph{above and beyond} the information given by the sum of its parts. The model was aimed at trying to formally capture what is consciousness in living beings~\cite{ToEd-98-Sci, EdTo-00, Tono-08-BB} and the reader is referred to Tononi's papers for detailed motivations of the model.

\sk
We have considered the discrete version of the model~\cite{BaTo-08-PCB, Tono-08-BB, BaTo-09-PCB}. The original papers describing the model are not always fully clear in their mathematical formulation and here we have given the first formal description of such a model where all steps are detailed presented.

In doing so we have provided a more general formulation of such a model, which is independent from the time chosen for the analysis and from the uniformity of the probability distribution at the initial time instant.

Finally, we have also given here the first formal proof that a system made up by independent parts has a value of integrated information equal to zero.

\paragraph{Acknowledgments.} We would like to thank Luciano Gual\`{a} and Guido Proietti for useful and interesting discussions related to the work here described.

\renewcommand{\baselinestretch}{1.0}
\small

\bibliographystyle{plain}
\bibliography{Integrated-Information-BIBLIO}

\end{document}